\DeclareMathOperator*{\argmax}{argmax}
\DeclareMathOperator*{\argmin}{argmin}
\newcommand{\N}{\mathbb{N}}
\newcommand{\R}{\mathbb{R}}
\newcommand{\Rd}{\R^d}
\newcommand{\D}{\mathcal{D}}
\renewcommand{\L}{\mathcal{L}}
\newcommand{\X}{\mathcal{X}}
\newcommand{\Y}{\mathcal{Y}}
\newcommand{\Rcal}{\mathcal{R}}
\newcommand{\inner}[1]{\left\langle #1\right\rangle}
\newcommand{\norm}[1]{\left\lVert#1\right\rVert}
\newcommand{\vecx}[1]{\mathbf{x}^{(#1)}}
\newcommand{\vectx}[1]{\tilde{\mathbf{x}}^{[#1]}}
\newcommand{\scaly}[1]{y^{(#1)}}
\newcommand{\vecp}[1]{\mathbf{p}_{#1}}
\newcommand{\indp}[2]{p_{#1,#2}}
\newcommand{\vecw}[1]{\mathbf{w}_{#1}}
\newcommand{\bfw}{\mathbf{w}}
\newcommand{\bfp}{\mathbf{p}}
\newcommand{\Mw}{\mathbf{M}_\bfw}
\newcommand{\Mwt}{\mathbf{M}_{\vecw{t}}}
\newcommand{\bfwsh}{\hat{\bfw}^*}
\newcommand{\Mwsh}{\mathbf{M}_{\bfwsh}}
\newcommand{\paren}[1]{\left(#1\right)}
\newcommand{\brac}[1]{\left[#1\right]}
\newcommand{\cbr}[1]{\left\{#1\right\}}
\newcommand{\abs}[1]{\left|#1\right|}
\newcommand{\LA}{\Longrightarrow}
\newcommand{\sumtn}[2]{\sum_{#1}^{#2}}
\newcommand{\Rc}{\mathcal{R}}
\newcommand{\breg}{\D_\Rc}
\newcommand{\KL}{\text{KL}}
\newcommand*{\horzbar}{\rule[.5ex]{3ex}{0.5pt}}
\begin{document}

\title{Linear Separation via Optimism}

\author{\name Rafael Hanashiro \email rhanashiro3@gatech.edu \\
       \addr College of Computing, Georgia Institute of Technology
       \AND
       \name Jacob Abernethy \email prof@gatech.edu \\
       \addr College of Computing, Georgia Institute of Technology}


\maketitle

\begin{abstract}
Binary linear classification has been explored since the very early days of the machine learning literature. Perhaps the most classical algorithm is the Perceptron (\cite{perceptron}), where a weight vector used to classify examples is maintained, and additive updates are made as incorrect examples are discovered. The Perceptron has been thoroughly studied and several versions have been proposed over many decades. The key theoretical fact about the Perceptron is that, so long as a perfect linear classifier exists with some margin $\gamma > 0$, the number of required updates to find such a perfect linear separator is bounded by $\frac 1 {\gamma^2}$. What has never been fully addressed is: does there exist an algorithm that can achieve this with fewer updates? In this paper we answer this in the affirmative: we propose the Optimistic Perceptron  algorithm, a simple procedure that finds a separating hyperplane in no more than $\frac 1 \gamma$ updates. We also show experimentally that this procedure can significantly outperform Perceptron.
\end{abstract}

\begin{keywords}
linear classification, minimax optimization, online convex optimization
\end{keywords}

\section{Introduction}
The problem of linear classification has an extensive history that dates back to the mid-20th century. One of the earliest both practical and theoretical accomplishments in machine learning is known as the Perceptron algorithm, introduced by \cite{perceptron}. Many variants have been given over subsequent decades, including the Winnow algorithm (\cite{winnow}), the Voted Perceptron (\cite{voted_perceptron}) and the Second Order Perceptron (\cite{2order_perceptron}), which exploit assumptions such as sparse solutions or second-order information. The literature has also extended to robust methods in the non-separable setting; e.g., Kernel Perceptron (\cite{kernel_perceptron}) and the Maxover algorithm (\cite{maxover}). Despite the long history of development in the field, there has been little progress in substantially improving the fundamental Perceptron mistake bound presented by \cite{perceptron_bound}. In this work, we propose the Optimistic Perceptron  algorithm, a simple procedure for learning a linear predictor that incorporates optimism in its updates. Given that we have $n$ labelled data points, and we assume there exists a perfect linear separator with margin threshold $\gamma > 0$, we claim that Optimistic Perceptron  converges after a total of $O(\frac{1}{\gamma})$ updates, a significant improvement over the Perceptron bound of $O(\frac 1 {\gamma^2})$. While each update requires a loop over $n$ examples, the same is effectively true for the Perceptron, which requires searching through all examples to find a violation.

Our result ostensibly beats a well-known lower bound, as it is easy to construct scenarios in which any online algorithm that can only consider one example on each update must perform $O(\frac{1}{\gamma^2})$ updates in order to find a perfect linear separator. But our method uses two non-standard tools to achieve a faster rate. First, at each round we construct a ``pseudoexample'' by taking a weighted average of existing examples. Second, our algorithm performs an ``optimistic'' update on the weight vector, following an idea put forward by \cite{omd_omd} and originally presented by \cite{og_opt}.

\begin{named}{Outline}
In Section~\ref{sec:main_result}, we introduce basic notation along with the general problem statement and describe the Optimistic Perceptron  algorithm. In Section~\ref{sec:experiments}, we compare the empirical performance between our algorithm and Perceptron on a small margin example. In Section~\ref{sec:motivation}, we detail the game theoretic motivation of the algorithm and state the regret bound theorems and convergence rates.
\end{named}

\section{Main Result}
\label{sec:main_result}

\begin{named}{Notation}
Throughout this paper, $\norm{\cdot}$ will denote the Euclidean norm. More generally, for $p\geq1$ we denote the $\ell_p$-norm by $\norm{\cdot}_p$. We use lowercase and uppercase boldface letters to describe vectors and matrices respectively. For a dataset of size $n$, we consider inputs in $\X=\Rd$ with labels in $\Y=\cbr{\pm1}$ and we denote the $i$th example pair by $(\vecx{i}, \scaly{i})\in\X\times\Y$. For sequentially updated variables, we indicate the iteration with a subscript (e.g., time $t$ iterate $\vecp{t}$). We use $v_j$ to denote the $j$th coordinate of a vector $\mathbf{v}$ and $\indp{t}{j}$ to denote the $j$th coordinate of iterate $\vecp{t}$. For element-wise multiplication between two vectors, we use $\odot$. Given an integer $n\in\N$, we define set $[n]=\cbr{1,\dots,n}$ and probability simplex $\Delta_n=\cbr{\bfp\in\R^n: \sumtn{i=1}{n}p_i=1,\, p_i\geq0 \text{ for } i\in[n]}$.
\end{named}

\subsection{Problem statement}
Define dataset $S=\cbr{(\vecx{i},\scaly{i})}_{i=1}^n\subseteq\X\times\Y$ and let $r>0$ be an upper bound on $\cbr{\norm{\vecx{i}}}_{i=1}^n$. We say that the data are \emph{linearly separable} if there is some $\bfw^*\in\R^d$ satisfying $\scaly{i}\inner{\bfw^*,\vecx{i}} > 0$ for every $i \in [n]$. We say that the data are linearly separable \emph{with margin} $\gamma > 0$ when $\scaly{i}\inner{\bfw^*,\vecx{i}}\geq \gamma$ for every $i\in[n]$. Given a linearly separable dataset $S$ with margin $\gamma$, our goal is to find any linear separator $\bfw$ in the fewest number of updates as a function of $\gamma$ and $n$. For convenience, we introduce the following notation:
\begin{equation*}
    \mathbf{X} =
    \begin{bmatrix}
        \horzbar & \vecx{1} & \horzbar\\
        & \vdots &\\
        \horzbar & \vecx{n} & \horzbar
    \end{bmatrix}
    \quad\text{and}\quad
    \mathbf{y} =
    \begin{bmatrix}
        \scaly{1}\\
        \vdots\\
        \scaly{n}
    \end{bmatrix}
\end{equation*}
In what follows, we define a unit of operation as an addition or inner product between two vectors in $\Rd$ and define an iteration as a single operation.

\subsection{Optimistic Perceptron }
We propose an algorithm that finds a separating hyperplane in an iterative fashion. On each round, we first update the hyperplane and subsequently form a distribution over the data. The procedure is described in Algorithm~\ref{alg:opt_sep}.
\begin{algorithm}[H]
\caption{Optimistic Perceptron }
\label{alg:opt_sep}
\begin{algorithmic}[1]
\State $\vecp{0} \gets\paren{\frac{1}{n},\dots,\frac{1}{n}}$
\State $\vectx{-1}, \vectx{0} \gets (\vecp{0} \odot \mathbf{y})^\top\mathbf{X}$ \Comment{init pseudoexamples}
\State $\vecw{0} \gets \bm{0}$
\For{$t\gets1$ \textbf{to} $T$}
    \State $\vecw{t} \gets \vecw{t-1} + 2\vectx{t-1} - \vectx{t-2}$ \Comment{optimistic linear update}
    \For{$i\gets1$ \textbf{to} $n$}
        \State $\indp{t}{i} \gets \frac{\indp{t-1}{i} \exp\paren{-\frac{1}{r^2}\scaly{i}\vecw{t}^\top\vecx{i}} }{\sumtn{j=1}{n} \indp{t-1}{j} \exp\paren{-\frac{1}{r^2}\scaly{j}\vecw{t}^\top\vecx{j}}}$ \Comment{update example weights}
    \EndFor
    \State $\vectx{t} \gets (\vecp{t} \odot \mathbf{y})^\top \mathbf{X}$ \Comment{construct $t$th pseudoexample}
\EndFor
\State Return $\frac{1}{T}\sumtn{t=1}{T}\vecw{t}$
\end{algorithmic}
\end{algorithm}
\noindent For a separating hyperplane $\bfw^*$, we define its margin as
\begin{equation}
\label{eq:perceptron_margin}
    \gamma = \min_{i\in[n]} \frac{\abs{\inner{\bfw^*, \vecx{i}}}}{\norm{\bfw^*}}
\end{equation}
\cite{perceptron_bound} showed that the Perceptron algorithm converges after making $O(\frac{1}{\gamma^2})$ \emph{updates}. What is often overlooked is the fact that, in the batch setting, Perceptron can take $O(n)$ time to find a violating example, implying that as many as $O(\frac{n}{\gamma^2})$ iterations are required in order to find a perfect separator. We claim that Algorithm~\ref{alg:opt_sep} converges after $O(\frac{n}{\gamma})$ iterations. The details are given in Section~\ref{sec:motivation}.

\section{Experiments}
\label{sec:experiments}
Consider the dataset of the form
\begin{equation}
\label{eq:ex_dataset}
    \vecx{i} = (\underbrace{(-1)^i,\dots,(-1)^i,(-1)^{i+1}}_{i\text{ first components}},0,\dots,0)\in\R^n, \quad \scaly{i}=(-1)^{i+1}
\end{equation}
for $i=1,\dots,n$. This is a well-known example in which Perceptron makes $\Omega(2^n)$ mistakes. We ran both algorithms on this dataset with $n$ ranging from 1 to 15. For Perceptron, we repeatedly passed through the data, iterating in the order of the points on each pass (i.e., from $\vecx{1}$ to $\vecx{n}$). Figure~\ref{fig:ex} illustrates the \emph{total} number of iterations needed by each algorithm before reaching a separating hyperplane. An important observation, and possible explanation for the significant discrepancy in performance, is that the average number of mistakes made by Perceptron on each pass through the data remained approximately constant at 2 as we increased the size of the dataset. In such cases, the upper bound on the total iteration count required by Perceptron is $O(\frac{n}{\gamma^2})$, and our rate is an improvement of $\frac{1}{\gamma}$.

\begin{figure}
    \centering
    \begin{subfigure}[b]{0.5\textwidth}
        \centering
        \includegraphics[width=\textwidth]{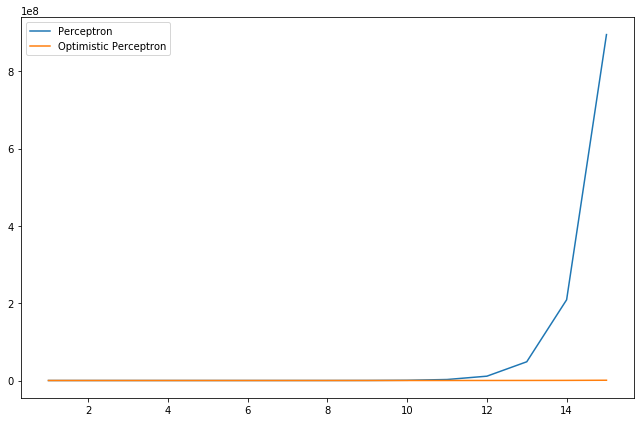}
        \caption{total iteration count vs. $n$}
        \label{fig:ex_count}
    \end{subfigure}%
    \begin{subfigure}[b]{0.5\textwidth}
        \centering
        \includegraphics[width=\textwidth]{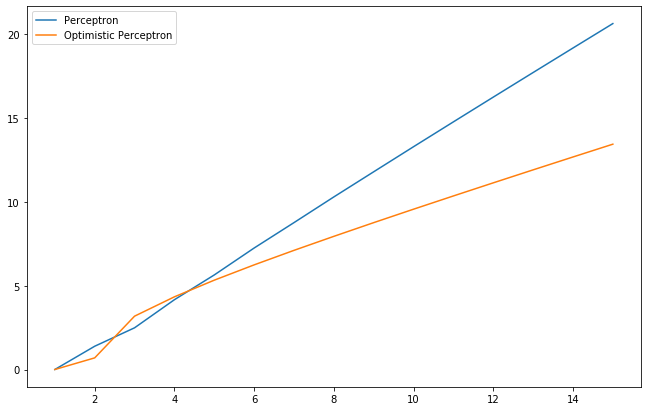}
        \caption{log(total iteration count) vs. $n$}
        \label{fig:ex_logcount}
    \end{subfigure}
    \caption{Number of total iterations needed by the Perceptron and Optimistic Perceptron  algorithms on dataset \eqref{eq:ex_dataset} until convergence. (\subref{fig:ex_count}) displays the iteration count as we increase $n$, and (\subref{fig:ex_logcount}) displays the logarithmic relationship.}
    \label{fig:ex}
\end{figure}

\section{Motivation - Optimism Under Uncertainty}
\label{sec:motivation}
Algorithm~\ref{alg:opt_sep} is inspired by a game theoretic interpretation of the linear separation problem. We begin by introducing a new formulation of Equation~\eqref{eq:perceptron_margin} and then construct a game framework under which we approximate an equilibrium point via optimistic online strategies. The approach is largely inspired by the framework to minimize smooth functions proposed by \cite{oftl_md}. In Appendix~\ref{sec:appendix_A}, we show how the prescribed strategy is equivalent to Algorithm~\ref{alg:opt_sep} for a particular choice of hyperparameters, and regret bound proofs are deferred to Appendix~\ref{sec:appendix_B}.

\subsection{Linear separation as a game}
We begin by assuming that $S$ can be linearly separated by a vector $\bfw^*$, and we define unit vector $\bfwsh = \frac{\bfw^*}{\norm{\bfw^*}}$. Equation~\eqref{eq:perceptron_margin} implies that $\scaly{i}\inner{\bfwsh,\vecx{i}} \geq \gamma$ for every $i\in[n]$. If we define functions $g_i(\bfw)=\scaly{i}\bfw^\top\vecx{i}$, then for any distribution $\bfp\in\Delta_n$ over the data, we have that $\sumtn{i=1}{n}p_ig_i(\bfwsh) \geq \sumtn{i=1}{n}p_i\gamma = \gamma$. We can restate this as the following bound:
\begin{equation}
\label{eq:minimax_exp}
    \min_{\bfp\in\Delta_n} \bfp^\top\Mwsh \geq \gamma
\end{equation}
where we define vector $\Mw=\brac{g_1(\bfw),\dots,g_n(\bfw)}^\top$.

\subsection{Approximating equilibria}
We can now view the problem as a zero-sum game between two players: the data and the learner. The former aims to minimize objective $\bfp^\top\Mw$ over $\bfp\in\Delta_n$, while the latter aims to maximize it over $\bfw\in\Rd$. We will obtain an approximate equilibrium to the game through sequential playing: the learner first selects a vector using an Optimistic Follow the Regularized Leader (OFTRL) strategy and the data subsequently responds with a distribution via Mirror Descent (MD).

We begin by characterizing the learner's strategy and derive a bound on its total regret. Note that the learner plays first and, thus, only has access to the data's actions up until the previous round. Theorem~\ref{thm:p1_regret} describes the regret bound.
\begin{theorem}
\label{thm:p1_regret}
Suppose the learner plays first and uses an OFTRL strategy. That is, it makes the following decision on round $t$:
\begin{equation*}
    \vecw{t} = \argmax_{\bfw\in\Rd} \cbr{\paren{\vecp{t-1}+\sumtn{s=1}{t-1}\vecp{s}}^\top\Mw - \frac{1}{2}\norm{\bfw}^2}
\end{equation*}
Then we can bound its regret at time $T$ by
\begin{equation}
\label{eq:p1_regret}
    \paren{\sumtn{t=1}{T}\vecp{t}}^\top\Mwsh - \sumtn{t=1}{T}\vecp{t}^\top\Mwt \leq \frac{1}{2} + \sumtn{t=1}{T} \frac{r^2}{2} \norm{\vecp{t}-\vecp{t-1}}_1^2
\end{equation}
\end{theorem}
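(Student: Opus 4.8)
The plan is to recognize the OFTRL update as a known online learning strategy and invoke the standard OFTRL regret bound, then specialize it to this linear game. First I would observe that the learner faces a sequence of linear loss functions: on round $t$ the ``reward'' of playing $\bfw$ is $\vecp{t}^\top\Mw = \vecp{t}^\top(\mathbf{y}\odot\mathbf{X}\bfw)$, which is linear in $\bfw$; write $\ell_t = (\mathbf{X}^\top(\vecp{t}\odot\mathbf{y}))$ so that $\vecp{t}^\top\Mw = \inner{\ell_t,\bfw}$. Since the regularizer is $\frac12\norm{\cdot}^2$ (which is $1$-strongly convex with respect to $\norm{\cdot}$), and the OFTRL prediction of the next loss is the previous one $\ell_{t-1}$ (equivalently the ``hint'' vector is $M$ evaluated at $\vecp{t-1}$), the update $\vecw{t} = \argmax_{\bfw}\{\inner{\ell_{t-1} + \sum_{s<t}\ell_s,\bfw} - \frac12\norm{\bfw}^2\}$ is exactly the stated one. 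Here it is essential that the comparator $\bfwsh$ is a \emph{unit} vector, so $\frac12\norm{\bfwsh}^2 = \frac12$; this accounts for the additive $\frac12$ term in \eqref{eq:p1_regret}.

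Next I would apply the standard OFTRL regret inequality, which for a $1$-strongly-convex regularizer $\Rc$ states that for any comparator $\bfw^*$,
\begin{equation*}
    \sumtn{t=1}{T}\inner{\ell_t,\bfw^*} - \sumtn{t=1}{T}\inner{\ell_t,\vecw{t}} \leq \Rc(\bfw^*) + \sumtn{t=1}{T}\frac{1}{2}\norm{\ell_t - \ell_{t-1}}^2,
\end{equation*}
with $\ell_0 := \ell_{-1}$ corresponding to $\vecp{0}$ (matching the algorithm's initialization of $\vectx{-1},\vectx{0}$). The left side is precisely $\paren{\sumtn{t=1}{T}\vecp{t}}^\top\Mwsh - \sumtn{t=1}{T}\vecp{t}^\top\Mwt$ and $\Rc(\bfwsh) = \frac12$. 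It then remains to bound the stability term $\norm{\ell_t - \ell_{t-1}}$. We have $\ell_t - \ell_{t-1} = \mathbf{X}^\top\big((\vecp{t}-\vecp{t-1})\odot\mathbf{y}\big)$, so $\norm{\ell_t-\ell_{t-1}} = \norm{\sumtn{i=1}{n}(\indp{t}{i}-\indp{t-1}{i})\scaly{i}\vecx{i}} \leq \sumtn{i=1}{n}\abs{\indp{t}{i}-\indp{t-1}{i}}\norm{\vecx{i}} \leq r\norm{\vecp{t}-\vecp{t-1}}_1$, using $\abs{\scaly{i}}=1$, the triangle inequality, and $\norm{\vecx{i}}\leq r$. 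Squaring gives $\frac12\norm{\ell_t-\ell_{t-1}}^2 \leq \frac{r^2}{2}\norm{\vecp{t}-\vecp{t-1}}_1^2$, and summing yields exactly \eqref{eq:p1_regret}.

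The main obstacle is the proof (or careful citation) of the OFTRL regret inequality itself — in particular getting the right ``predictable sequence'' version in which the stability penalty involves the \emph{difference} $\ell_t - \ell_{t-1}$ rather than the norm of $\ell_t$ alone; this is the feature that will ultimately let the difference terms telescope (or cancel against the MD player's movement) in the downstream convergence argument. Since the feasible set for $\bfw$ is all of $\Rd$, the $\argmax$ has the closed form $\vecw{t} = \ell_{t-1} + \sum_{s<t}\ell_s$, which lets one verify the strong-convexity/stability bookkeeping by hand rather than appealing to a black box; I would include this short computation (a one-line Bregman/"prox" telescoping argument) to keep the proof self-contained, being careful that the indexing of the hint $\ell_{t-1}$ and the initialization via $\vecp{0}$ line up with Algorithm~\ref{alg:opt_sep}.
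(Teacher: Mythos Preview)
Your proposal is correct and follows essentially the same approach as the paper: the paper proves the OFTRL regret inequality from scratch (via the standard $\tilde{F}_t$-telescoping and strong-concavity argument, using first-order optimality over $\Rd$ and then setting the extraneous hint $\tilde{f}_{T+1}$ to zero), and then bounds $\norm{\ell_t-\ell_{t-1}}$ by $r\norm{\vecp{t}-\vecp{t-1}}_1$ exactly as you do. The only difference is packaging---you invoke the predictable-sequence OFTRL bound as a known lemma, whereas the paper writes out that derivation inline.
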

\noindent Through the use of optimism, we can have the data counter the summation on the right-hand side of \eqref{eq:p1_regret}, resulting in a constant value when both bounds are added together. Note that the data plays second, so it has knowledge of the learner's action on each round. Theorem~\ref{thm:p2_regret} showcases this idea in more detail.
\begin{theorem}
\label{thm:p2_regret}
Let $\Rcal:\Delta_n\to\R$ be a $\lambda$-strongly convex function with respect to $\norm{\cdot}_1$ and let $H$ be an upper bound on $\breg\paren{\cdot,\cdot}$. Suppose that the data updates according to MD:
\begin{equation*}
    \vecp{t} = \argmin_{\bfp\in\Delta_n} \eta_t\inner{\bfp,\Mwt} + \breg\paren{\bfp,\vecp{t-1}}
\end{equation*}
for a non-increasing sequence $\cbr{\eta_t}_{t=1}^T$. Then we can describe the following bound on its regret at time $T$:
\begin{equation}
\label{eq:p2_regret}
    \sumtn{t=1}{T} \vecp{t}^\top\Mwt - \min_{\bfp\in\Delta_n} \bfp^\top\paren{\sumtn{t=1}{T}\Mwt} \leq \frac{H}{\eta_T} - \sumtn{t=1}{T} \frac{\lambda}{2\eta_t} \norm{\vecp{t}-\vecp{t-1}}_1^2
\end{equation}
\end{theorem}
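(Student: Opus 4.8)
The plan is to recognize this as a standard regret bound for Mirror Descent with an ``optimistic'' / negative term that one obtains by not throwing away the stability term in the usual analysis. The statement is really the well-known ``prox'' or ``be-the-leader''-style inequality: MD against losses $\Mwt$ satisfies a per-step bound in which the second-order term appears with a \emph{negative} sign once we compare to the true update rather than a linearized one. So the target is the telescoping sum of a one-step lemma.

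First I would fix notation and write the one-step inequality. Set $\ell_t = \Mwt$ and recall $\vecp{t} = \argmin_{\bfp\in\Delta_n}\{\eta_t\inner{\bfp,\ell_t}+\breg(\bfp,\vecp{t-1})\}$. The first-order optimality condition for this constrained minimization gives, for every $\bfp\in\Delta_n$,
\begin{equation*}
\eta_t\inner{\vecp{t}-\bfp,\ell_t} \leq \inner{\nabla\Rcal(\vecp{t-1})-\nabla\Rcal(\vecp{t}),\vecp{t}-\bfp}.
\end{equation*}
Then I would invoke the three-point identity for Bregman divergences,
\begin{equation*}
\inner{\nabla\Rcal(\vecp{t-1})-\nabla\Rcal(\vecp{t}),\vecp{t}-\bfp} = \breg(\bfp,\vecp{t-1}) - \breg(\bfp,\vecp{t}) - \breg(\vecp{t},\vecp{t-1}),
\end{equation*}
so that
\begin{equation*}
\eta_t\inner{\vecp{t}-\bfp,\ell_t} \leq \breg(\bfp,\vecp{t-1}) - \breg(\bfp,\vecp{t}) - \breg(\vecp{t},\vecp{t-1}).
\end{equation*}
The key point, and where this differs from the textbook MD bound, is that I keep the $-\breg(\vecp{t},\vecp{t-1})$ term instead of discarding it; by $\lambda$-strong convexity of $\Rcal$ with respect to $\norm{\cdot}_1$ we have $\breg(\vecp{t},\vecp{t-1})\geq\frac{\lambda}{2}\norm{\vecp{t}-\vecp{t-1}}_1^2$, which produces the negative quadratic term in \eqref{eq:p2_regret}.

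Next I would divide by $\eta_t$ and sum over $t=1,\dots,T$ with the fixed comparator $\bfp = \argmin_{\bfp\in\Delta_n}\bfp^\top(\sum_t\Mwt)$:
\begin{equation*}
\sumtn{t=1}{T}\inner{\vecp{t}-\bfp,\Mwt} \leq \sumtn{t=1}{T}\frac{1}{\eta_t}\paren{\breg(\bfp,\vecp{t-1})-\breg(\bfp,\vecp{t})} - \sumtn{t=1}{T}\frac{\lambda}{2\eta_t}\norm{\vecp{t}-\vecp{t-1}}_1^2.
\end{equation*}
The left side is exactly the regret on the left of \eqref{eq:p2_regret}, so it remains to bound the first sum on the right by $H/\eta_T$. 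Here I use Abel summation / rearranging: since $\{\eta_t\}$ is non-increasing, $\frac{1}{\eta_t}-\frac{1}{\eta_{t-1}}\geq 0$, and
\begin{equation*}
\sumtn{t=1}{T}\frac{\breg(\bfp,\vecp{t-1})-\breg(\bfp,\vecp{t})}{\eta_t} = \frac{\breg(\bfp,\vecp{0})}{\eta_1} + \sumtn{t=2}{T}\breg(\bfp,\vecp{t-1})\paren{\frac{1}{\eta_t}-\frac{1}{\eta_{t-1}}} - \frac{\breg(\bfp,\vecp{T})}{\eta_T}.
\end{equation*}
Dropping the last (nonpositive) term and upper bounding every $\breg(\bfp,\cdot)\leq H$ gives $\leq H\paren{\frac{1}{\eta_1}+\sumtn{t=2}{T}(\frac{1}{\eta_t}-\frac{1}{\eta_{t-1}})} = \frac{H}{\eta_T}$, which telescopes cleanly. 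Combining yields exactly \eqref{eq:p2_regret}.

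The only mild subtlety — the ``hard part'' — is making the first-order optimality step fully rigorous on the simplex: $\Rcal$ is only defined on $\Delta_n$, so one must argue that $\nabla\Rcal(\vecp{t})$ exists in the appropriate (relative-interior) sense and that the variational inequality $\inner{\eta_t\ell_t+\nabla\Rcal(\vecp{t})-\nabla\Rcal(\vecp{t-1}),\bfp-\vecp{t}}\geq 0$ holds for all $\bfp\in\Delta_n$; this is standard when $\Rcal$ is a Legendre-type function (e.g. negative entropy, which is the relevant case here and gives $\lambda=1$, $H=\log n$), but worth stating as a hypothesis. Everything else is the routine three-point-identity-plus-telescoping computation, so I would not belabor it.
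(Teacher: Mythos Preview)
Your proposal is correct and follows essentially the same route as the paper: first-order optimality for the MD subproblem, the three-point Bregman identity to get $\eta_t\inner{\vecp{t}-\bfp,\Mwt}\leq \breg(\bfp,\vecp{t-1})-\breg(\bfp,\vecp{t})-\breg(\vecp{t},\vecp{t-1})$, strong convexity to convert the last term into $-\tfrac{\lambda}{2}\norm{\vecp{t}-\vecp{t-1}}_1^2$, and then Abel summation with the monotonicity of $\{\eta_t\}$ to telescope to $H/\eta_T$. The only difference is presentational---the paper derives the three-point identity inline rather than naming it---and your remark about differentiability of $\Rcal$ on the boundary of the simplex is a fair caveat the paper omits.
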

\noindent We can combine Theorems~\ref{thm:p1_regret} and \ref{thm:p2_regret} to get the following result.
\begin{theorem}
\label{thm:comb_regret}
If we set $\eta_t=\frac{\lambda}{r^2}$ for all $t\in[T]$ in the MD update, then the following bound holds true:
\begin{equation}
    \min_{\bfp\in\Delta_n} \bfp^\top\Mwsh - \min_{\bfp\in\Delta_n} \bfp^\top\paren{\frac{1}{T}\sumtn{t=1}{T}\Mwt} \leq \frac{\lambda+2Hr^2}{2\lambda T}
\end{equation}
\end{theorem}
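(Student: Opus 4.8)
The plan is to add the two regret bounds of Theorems~\ref{thm:p1_regret} and~\ref{thm:p2_regret} and observe that the prescribed step size is exactly the one that makes the path-length (``variation'') terms cancel. The learner plays the same sequence $\vecw{1},\dots,\vecw{T}$ in both statements, so adding \eqref{eq:p1_regret} to \eqref{eq:p2_regret} cancels the term $\sumtn{t=1}{T}\vecp{t}^\top\Mwt$ on the left, leaving on the left-hand side
\[
    \paren{\sumtn{t=1}{T}\vecp{t}}^\top\Mwsh - \min_{\bfp\in\Delta_n}\bfp^\top\paren{\sumtn{t=1}{T}\Mwt},
\]
and on the right-hand side $\frac{1}{2} + \frac{H}{\eta_T} + \sumtn{t=1}{T}\paren{\frac{r^2}{2} - \frac{\lambda}{2\eta_t}}\norm{\vecp{t}-\vecp{t-1}}_1^2$. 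Setting $\eta_t = \lambda/r^2$ (constant, hence the non-increasing hypothesis of Theorem~\ref{thm:p2_regret} is met) gives $\frac{\lambda}{2\eta_t} = \frac{r^2}{2}$, so every summand vanishes, while $\frac{H}{\eta_T} = \frac{Hr^2}{\lambda}$; the right-hand side collapses to the constant $\frac{1}{2} + \frac{Hr^2}{\lambda} = \frac{\lambda + 2Hr^2}{2\lambda}$.

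It then remains to pass from this ``sum'' inequality to the claimed ``average'' inequality. Since each $\vecp{t}\in\Delta_n$, the average $\frac{1}{T}\sumtn{t=1}{T}\vecp{t}$ also lies in $\Delta_n$, so, using that $\Mwsh$ is a fixed vector, $\paren{\sumtn{t=1}{T}\vecp{t}}^\top\Mwsh = T\paren{\frac{1}{T}\sumtn{t=1}{T}\vecp{t}}^\top\Mwsh \geq T\min_{\bfp\in\Delta_n}\bfp^\top\Mwsh$. Substituting this lower bound for the first term on the left, dividing through by $T$, and using $\frac{1}{T}\min_{\bfp\in\Delta_n}\bfp^\top\paren{\sumtn{t=1}{T}\Mwt} = \min_{\bfp\in\Delta_n}\bfp^\top\paren{\frac{1}{T}\sumtn{t=1}{T}\Mwt}$ yields precisely the stated bound.

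There is essentially no obstacle: the content lives entirely in Theorems~\ref{thm:p1_regret} and~\ref{thm:p2_regret}, and this result is the bookkeeping step that stitches them together. The one point worth a sentence of care is \emph{why} the variation terms line up term by term --- namely that the loss vector $\Mwt$ the data incurs on round $t$ in the MD analysis is the same object that appears (negated) in the OFTRL regret after substituting the comparator $\bfw = \bfwsh$ --- which is what licenses summing the two bounds at all. (Note that \eqref{eq:minimax_exp}, i.e.\ $\min_{\bfp\in\Delta_n}\bfp^\top\Mwsh \geq \gamma$, is not needed for this theorem itself; it is what converts this bound into a $\gamma$-dependent convergence rate in the subsequent step.)
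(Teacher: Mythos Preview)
Your proposal is correct and follows essentially the same route as the paper: add \eqref{eq:p1_regret} and \eqref{eq:p2_regret}, use that the average $\frac{1}{T}\sumtn{t=1}{T}\vecp{t}\in\Delta_n$ to replace the comparator term by $\min_{\bfp\in\Delta_n}\bfp^\top\Mwsh$, choose $\eta_t=\lambda/r^2$ to kill the variation sum, and divide by $T$. The only cosmetic difference is the order in which you divide by $T$ and substitute $\eta_t$.
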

\begin{proof}
Adding bounds \eqref{eq:p1_regret} and \eqref{eq:p2_regret} yields
\begin{align*}
    \min_{\bfp\in\Delta_n} \bfp^\top\Mwsh - \min_{\bfp\in\Delta_n} \bfp^\top\paren{\frac{1}{T}\sumtn{t=1}{T}\Mwt} &\leq   \paren{\frac{1}{T}\sumtn{t=1}{T}\vecp{t}}^\top\Mwsh - \min_{\bfp\in\Delta_n} \bfp^\top\paren{\frac{1}{T}\sumtn{t=1}{T}\Mwt}\\
    &\leq \frac{\eta_T + 2H}{2\eta_T T} + \frac{1}{T}\sumtn{t=1}{T} \paren{\frac{r^2}{2}-\frac{\lambda}{2\eta_t}} \norm{\vecp{t}-\vecp{t-1}}_1^2
\end{align*}
By setting $\eta_t=\frac{\lambda}{r^2}$ for all $t\in[T]$, the summation on the right-hand side of the inequality cancels out. Substituting for $\eta_T$ leads to the desired bound.
\end{proof}
If both players play this game for $T>\frac{\lambda+2Hr^2}{2\lambda\gamma}$ rounds, where $\gamma$ is the margin as defined in \eqref{eq:perceptron_margin}, then Theorem~\ref{thm:comb_regret} and inequality \eqref{eq:minimax_exp} imply that
\begin{align}
    \min_{\bfp\in\Delta_n} \bfp^\top\paren{\frac{1}{T}\sumtn{t=1}{T}\Mwt} &\geq \min_{\bfp\in\Delta_n} \bfp^\top\Mwsh - \frac{\lambda+2Hr^2}{2\lambda T} \notag\\
    &> \min_{\bfp\in\Delta_n} \bfp^\top\Mwsh - \gamma \notag\\
    &\geq 0 \label{eq:minimax_conc}
\end{align}
Let us define $\bar{\bfw}_T=\frac{1}{T}\sum_t\vecw{t}$. Then \eqref{eq:minimax_conc} implies that 
\begin{equation*}
    \scaly{i}\inner{\bar{\bfw}_T,\vecx{i}} = \frac{1}{T}\sumtn{t=1}{T}g_i(\vecw{t}) = \brac{\frac{1}{T}\sumtn{t=1}{T}\Mwt}_i > 0
\end{equation*}
for every $i\in[n]$. In particular, it follows that $\bar{\bfw}_T$ correctly classifies every point in $S$. We were able to achieve this classifier after $O(\frac{1}{\gamma})$ rounds of the game, and a total of $O(\frac{n}{\gamma})$ iterations since each MD update requires passing through the data.

\section{Conclusion}
While the machine learning literature has seen tremendous development in the problem of linear prediction, little has been done to fundamentally improve the Perceptron mistake bound. In this paper, we showed that via game theoretic lens, we can use optimism to obtain a much faster rate with a remarkably simple procedure that performs well in theory and in practice.

\bibliography{references}

\newpage

\appendix

\section{Algorithmic details}
\label{sec:appendix_A}
Here, we derive closed form expressions for the updates in Section~\ref{sec:motivation} and show how they describe Algorithm~\ref{alg:opt_sep}.

\subsection{Learner update}
\newcommand{\bfy}{\mathbf{y}}
\newcommand{\bfX}{\mathbf{X}}
\newcommand{\bfwstar}{\mathbf{w}^*}
\newcommand{\lambdastar}{\lambda^*}

The OFTRL update is of the form
\begin{equation*}
    \argmax_{\bfw\in\Rd} \paren{\bfp\odot\bfy}^\top\bfX\bfw - \frac{1}{2}\norm{\bfw}^2
\end{equation*}
for some vector $\bfp\in\R^n$. We have that solution $\bfwstar$ satisfies $\bfwstar = \paren{\bfp\odot\bfy}^\top\bfX$. Hence, using the proper terms, we get that
\begin{equation*}
    \vecw{t} = \brac{\paren{\vecp{t-1} + \sumtn{s=1}{t-1}\vecp{s}} \odot \bfy}^\top\bfX
\end{equation*}

\subsection{Data update}
\newcommand{\bfq}{\mathbf{q}}
\newcommand{\bfz}{\mathbf{z}}
\newcommand{\bmu}{\bm{\mu}}
\newcommand{\bfpstar}{\mathbf{p}^*}
\newcommand{\bmustar}{\bmu^*}

For the MD update, we will define $\Rc$ to be the negative entropy function, which is 1-strongly convex w.r.t. $\norm{\cdot}_1$ over the simplex. That is, for a distribution $\bfp\in\Delta_n$ we have that
\begin{equation*}
    \Rc(\bfp) = \sumtn{i=1}{n} p_i\log p_i
\end{equation*}
Then the Bregman divergence will coincide with the KL divergence: $\breg\paren{\bfp, \bfq} = \KL\paren{\bfp||\bfq}$. This leads to a convex optimization problem of the form
\begin{align*}
    \min_{\bfp\in\R^n} \;& \bfp^\top\bfz + \sumtn{i=1}{n} p_i\log\frac{p_i}{q_i}\\
    \text{s.t.} \;& \bfp \succeq 0\\
    & \sumtn{i=1}{n} p_i = 1
\end{align*}
We can write the Lagrangian as
\begin{equation*}
    \L\paren{\bfp, \bmu, \lambda} = \bfp^\top\bfz + \sumtn{i=1}{n} p_i\log\frac{p_i}{q_i} - \bmu^\top\bfp + \lambda \paren{\sumtn{i=1}{n}p_i - 1}
\end{equation*}
From strong duality and KKT, we know that primal and dual optimal $\bfpstar$ and $(\bmustar, \lambdastar)$ satisfy
\begin{equation*}
    \brac{\nabla_\bfp \L\paren{\bfpstar, \bmustar, \lambdastar}}_i = z_i + 1 + \log\frac{p^*_i}{q_i} - \mu^*_i + \lambdastar = 0
\end{equation*}
which implies that 
\begin{equation*}
    p^*_i = \frac{q_i \exp(-z_i)}{\exp(1 - \mu^*_i + \lambdastar)}
\end{equation*}
Since $\sum_i p^*_i = 1$, the denominator on the right-hand side will serve as a normalization term. That is, $\exp(1 - \mu^*_i + \lambdastar) = \sum_j q_j \exp(-z_j)$ for all $i\in[n]$, and we can set $\bmustar = 0$ to satisfy dual feasibility and complementary slackness. By setting $\bfz = \eta_t\Mwt$ and $\bfq = \vecp{t-1}$ where $\eta_t = \frac{1}{r^2}$ for all $t$, we get that the update corresponds to an instance of the Exponential Weights Algorithm and that the iterate at time $t$ is defined by
\begin{equation*}
    \indp{t}{i} = \frac{\indp{t-1}{i} \exp\paren{-\frac{1}{r^2} \scaly{i} \vecw{t}^\top\vecx{i}}}{\sumtn{j=1}{n} \indp{t-1}{j} \exp\paren{-\frac{1}{r^2} \scaly{j} \vecw{t}^\top\vecx{j}}}
\end{equation*}

\section{Regret bound proofs}
\label{sec:appendix_B}

\subsection{Proof of Theorem~\ref{thm:p1_regret}}
\newcommand{\ftldw}[1]{\tilde{f}_{#1}(\bfw)}
\newcommand{\Ftldw}[1]{\tilde{F}_{#1}(\bfw)}
\newcommand{\fw}[1]{f_{#1}(\bfw)}
\newcommand{\Fw}[1]{F_{#1}(\bfw)}

\newcommand{\ftldwt}[2]{\tilde{f}_{#1}(\bfw_{#2})}
\newcommand{\Ftldwt}[2]{\tilde{F}_{#1}(\bfw_{#2})}
\newcommand{\fwt}[2]{f_{#1}(\bfw_{#2})}
\newcommand{\Fwt}[2]{F_{#1}(\bfw_{#2})}

\newcommand{\ftldwsh}[1]{\tilde{f}_{#1}(\bfwsh)}
\newcommand{\Ftldwsh}[1]{\tilde{F}_{#1}(\bfwsh)}
\newcommand{\fwsh}[1]{f_{#1}(\bfwsh)}
\newcommand{\Fwsh}[1]{F_{#1}(\bfwsh)}

\newcommand{\Mprime}{\mathbf{M}'}  
\newcommand{\bfu}{\mathbf{u}}
\newcommand{\fprime}[1]{\mathbf{f}_{#1}'}

\begin{proof2}
For this proof, we will define the following functions:
\begin{align*}
    & \fw{t} = \vecp{t}^\top\Mw \\
    & \ftldw{t} = \vecp{t-1}^\top\Mw\\
    & \Fw{t} = \sumtn{s=1}{t-1}\fw{s} - \frac{1}{2}\norm{\bfw}^2\\
    & \Ftldw{t} = \ftldw{t} + \sumtn{s=1}{t-1}\fw{s} - \frac{1}{2}\norm{\bfw}^2 = \ftldw{t} + \Fw{t}
\end{align*}
The OFTRL update can then be written as
\begin{equation*}
    \vecw{t} = \argmax_{\bfw\in\Rd} \cbr{\paren{\vecp{t-1}+\sumtn{s=1}{t-1}\vecp{s}}^\top\Mw - \frac{1}{2}\norm{\bfw}^2} = \argmax_{\bfw\in\Rd} \Ftldw{t}
\end{equation*}
Recall that $\bfwsh$ is a unit vector in $\Rd$. We have that
\begin{align*}
    \paren{\sumtn{t=1}{T}\vecp{t}}^\top\Mwsh - \sumtn{t=1}{T}\vecp{t}^\top\Mwt &= \sumtn{t=1}{T}\fwsh{t} - \sumtn{t=1}{T}\fwt{t}{t}\\
    &= \frac{1}{2}\norm{\bfwsh}^2 - \ftldwsh{T+1} + \Ftldwsh{T+1} - \sumtn{t=1}{T}\fwt{t}{t}\\
    &= \frac{1}{2} - \ftldwsh{T+1} + \Ftldwsh{T+1} - \Ftldwt{T+1}{T+1} + \Ftldwt{1}{1}\\
    &\hspace{3.5cm} + \sumtn{t=1}{T} \brac{\Ftldwt{t+1}{t+1} - \Ftldwt{t}{t} - \fwt{t}{t}}\\
\end{align*}
Using the maximality of $\vecw{t}$, we know that $\Ftldwsh{T+1} \leq \Ftldwt{T+1}{T+1}$. Thus,
\begin{align*}
    \paren{\sumtn{t=1}{T}\vecp{t}}^\top\Mwsh - \sumtn{t=1}{T}\vecp{t}^\top\Mwt &\leq \frac{1}{2} - \ftldwsh{T+1} + \Ftldwt{1}{1} + \sumtn{t=1}{T} \brac{\Ftldwt{t+1}{t+1} - \Ftldwt{t}{t} - \fwt{t}{t}}\\
    &= \frac{1}{2} - \ftldwsh{T+1} - \frac{1}{2}\norm{\vecw{1}}^2 + \ftldwt{1}{1}\\
    &\hspace{0.5cm} + \sumtn{t=1}{T} \brac{\Fwt{t+1}{t+1} - \Fwt{t}{t} - \fwt{t}{t} + \ftldwt{t+1}{t+1} - \ftldwt{t}{t}}\\
    &\leq \frac{1}{2} - \ftldwsh{T+1} + \ftldwt{T+1}{T+1} + \sumtn{t=1}{T} \brac{\Fwt{t+1}{t+1} - \Fwt{t}{t} - \fwt{t}{t}}
\end{align*}
Since the function $\tilde{f}_{T+1}$ won't affect the rest of the analysis, we will set it to the zero map. Then the bound above becomes
\begin{equation}
\label{eq:reg_bound_1}
    \paren{\sumtn{t=1}{T}\vecp{t}}^\top\Mwsh - \sumtn{t=1}{T}\vecp{t}^\top\Mwt \leq \frac{1}{2} + \sumtn{t=1}{T} \brac{\Fwt{t+1}{t+1} - \Fwt{t}{t} - \fwt{t}{t}}
\end{equation}
Now, let's take a closer look at each term in the summation on the right-hand side:
\begin{equation}
\label{eq:sum_term}
    \Fwt{t+1}{t+1} - \Fwt{t}{t} - \fwt{t}{t} = \Fwt{t}{t+1} + \fwt{t}{t+1} - \paren{\Fwt{t}{t} + \fwt{t}{t}}
\end{equation}
Note that the function $F_t+f_t$ is 1-strongly concave w.r.t. $\norm{\cdot}$ over $\R^d$, and that by the definition of $\vecw{t}$ it follows that $\nabla\ftldwt{t}{t}+\nabla\Fwt{t}{t} = \nabla\Ftldwt{t}{t} = 0$. Hence, in light of \eqref{eq:sum_term}, we get that
\begin{align}
    \Fwt{t+1}{t+1} - \Fwt{t}{t} - \fwt{t}{t} &\leq \inner{\nabla\Fwt{t}{t}+\nabla\fwt{t}{t}, \vecw{t+1}-\vecw{t}} - \frac{1}{2}\norm{\vecw{t+1}-\vecw{t}}^2 \notag\\
    &= \inner{\nabla\fwt{t}{t} - \nabla\ftldwt{t}{t}, \vecw{t+1}-\vecw{t}} - \frac{1}{2}\norm{\vecw{t+1}-\vecw{t}}^2 \notag\\
    &\leq \norm{\nabla\fwt{t}{t} - \nabla\ftldwt{t}{t}} \norm{\vecw{t+1}-\vecw{t}} - \frac{1}{2}\norm{\vecw{t+1}-\vecw{t}}^2 \notag\\
    &\leq \frac{1}{2}\norm{\nabla\fwt{t}{t} - \nabla\ftldwt{t}{t}}^2 \label{eq:grad_norm_bound}
\end{align}
Next, observe that for any $\bfp\in\R^n$, we have that
\begin{equation}
\label{eq:gen_grad}
    \nabla_{\bfw}\paren{\bfp^\top\Mw} = \sumtn{i=1}{n}p_i\scaly{i}\vecx{i} = \bfp^\top\Mprime
\end{equation}
where we define
\begin{equation*}
    \Mprime =
    \begin{bmatrix}
        \vspace{2mm}
        \horzbar & \scaly{1}\vecx{1} & \horzbar\\\vspace{2mm}
        & \vdots &\\
        \horzbar & \scaly{n}\vecx{n} & \horzbar
    \end{bmatrix}
\end{equation*}
Using the definitions of $f_t$ and $\tilde{f}_t$ and the differentiation shown in \eqref{eq:gen_grad}, we can further bound the expression on the right-hand side of \eqref{eq:grad_norm_bound} with
\begin{align*}
    \norm{\nabla\fwt{t}{t} - \nabla \ftldwt{t}{t}}^2 &= \norm{\paren{\vecp{t} - \vecp{t-1}}^\top\Mprime}^2 \\
    &= \norm{\sumtn{i=1}{n} \brac{\vecp{t} - \vecp{t-1}}_i \scaly{i}\vecx{i}}^2 \\
    &\leq \paren{\sumtn{i=1}{n} \abs{\brac{\vecp{t} - \vecp{t-1}}_i} \norm{\vecx{i}}}^2\\
    &\leq \paren{r \sumtn{i=1}{n} \abs{\brac{\vecp{t} - \vecp{t-1}}_i}}^2\\
    &= r^2\norm{\vecp{t} - \vecp{t-1}}_1^2
\end{align*}
where the last inequality follows from our definition of $r$ as an upper bound on the norm of the $\vecx{i}$'s. Combining this result with inequalities \eqref{eq:reg_bound_1} and \eqref{eq:grad_norm_bound} yields
\begin{equation*}
    \paren{\sumtn{t=1}{T}\vecp{t}}^\top\Mwsh - \sumtn{t=1}{T}\vecp{t}^\top\Mwt \leq \frac{1}{2} + \sumtn{t=1}{T} \frac{r^2}{2} \norm{\vecp{t} - \vecp{t-1}}_1^2
\end{equation*}
\end{proof2}

\subsection{Proof of Theorem~\ref{thm:p2_regret}}
\begin{proof2}
\newcommand{\bfr}{\mathbf{r}}

Let us present the MD update again here for convenience:
\begin{equation*}
    \vecp{t} = \argmin_{\bfp\in\Delta_n} \eta_t\inner{\bfp, \Mwt} + \breg\paren{\bfp, \vecp{t-1}}
\end{equation*}
Let $\bfpstar, \bfq, \bfr\in\Delta_n$ denote arbitrary vectors and let us define function $h(\bfp) = \bfp^\top\bfr + \breg\paren{\bfp,\bfq}$. Recall that $\breg\paren{\bfp, \bfq} = \Rc(\bfp) - \Rc(\bfq) - \inner{\nabla\Rc(\bfq), \bfp-\bfq}$, so we have that $\nabla h(\bfp) = \bfr + \nabla\Rc(\bfp) - \nabla\Rc(\bfq)$. Let $\bfp = \argmin_{\bfp'\in\Delta_n} h(\bfp')$. By the first order optimality condition, we have that
\begin{align*}
    \paren{\bfp - \bfpstar}^\top\nabla h(\bfp) \leq 0 &\LA \paren{\bfp - \bfpstar}^\top\paren{\bfr + \nabla\Rc(\bfp) - \nabla\Rc(\bfq)} \leq 0 \\
    &\LA \paren{\bfp - \bfpstar}^\top\bfr \leq \paren{\bfpstar - \bfp}^\top\paren{\nabla\Rc(\bfp) - \nabla\Rc(\bfq)} \\
    &\LA \paren{\bfp - \bfpstar}^\top\bfr \leq \breg\paren{\bfpstar, \bfq} - \breg\paren{\bfpstar, \bfp} - \breg\paren{\bfp, \bfq}
\end{align*}
If we let $\bfr = \eta_t\Mwt$ and $\bfq = \vecp{t-1}$, then $\bfp = \vecp{t}$ by definition, and we get that
\begin{align*}
    \eta_t \paren{\vecp{t} - \bfpstar}^\top\Mwt &\leq \breg\paren{\bfpstar, \vecp{t-1}} - \breg\paren{\bfpstar, \vecp{t}} - \breg\paren{\vecp{t}, \vecp{t-1}} \\
    &\leq \breg\paren{\bfpstar, \vecp{t-1}} - \breg\paren{\bfpstar, \vecp{t}} - \frac{\lambda}{2}\norm{\vecp{t} - \vecp{t-1}}_1^2
\end{align*}
where the last line follows from the $\lambda$-strong convexity of $\Rc$. Summing both sides of the inequality from $t=1,\dots,T$, and noting that $\cbr{\eta_t}_{t=1}^\top$ is a non-increasing sequence, yields
\begin{align*}
    \sumtn{t=1}{T} \paren{\vecp{t} - \bfpstar}^\top\Mwt &\leq \sumtn{t=1}{T} \frac{1}{\eta_t} \brac{\breg\paren{\bfpstar, \vecp{t-1}} - \breg\paren{\bfpstar, \vecp{t}} - \frac{\lambda\norm{\vecp{t} - \vecp{t-1}}_1^2}{2}} \\
    &= \frac{\breg\paren{\bfpstar, \vecp{0}}}{\eta_1} - \frac{\breg\paren{\bfpstar, \vecp{T}}}{\eta_T} \\
    &\hspace{2cm} + \sumtn{t=1}{T-1} \breg\paren{\bfpstar, \vecp{t}}\paren{\frac{1}{\eta_{t+1}} - \frac{1}{\eta_t}} - \sumtn{t=1}{T} \frac{\lambda\norm{\vecp{t} - \vecp{t-1}}_1^2}{2\eta_t} \\
    &\leq H\paren{\frac{1}{\eta_1} + \frac{1}{\eta_T} - \frac{1}{\eta_1}} - \sumtn{t=1}{T} \frac{\lambda\norm{\vecp{t} - \vecp{t-1}}_1^2}{2\eta_t} \\
    &\leq \frac{H}{\eta_T} - \sumtn{t=1}{T} \frac{\lambda\norm{\vecp{t} - \vecp{t-1}}_1^2}{2\eta_t}
\end{align*}
where $H$ is an upper bound on $\breg\paren{\cdot,\cdot}$.
\end{proof2}

\end{document}